\documentclass{article}

\usepackage[preprint]{neurips_2026}
\usepackage{multirow}
\usepackage{makecell}
\usepackage{tcolorbox}
\usepackage[utf8]{inputenc}
\usepackage[T1]{fontenc}
\usepackage{hyperref}
\usepackage{url}
\usepackage{booktabs}
\usepackage{amsfonts}
\usepackage{amsmath}
\usepackage{amssymb}
\usepackage{microtype}
\usepackage{xcolor}
\usepackage{wrapfig}
\usepackage{graphicx}
\usepackage{multirow}
\usepackage{colortbl}
\usepackage{xcolor}
\definecolor{lightblue}{RGB}{220,230,250}
\usepackage{amsthm}
\renewcommand{\citet}{\cite}
\newtheorem{theorem}{Theorem}[section]   
\newtheorem{proposition}[theorem]{Proposition}

\theoremstyle{definition}

\usepackage{algorithm}
\usepackage{algpseudocode}

\title{Continual Fine-Tuning of Large Language Models via Program Memory}

\author{%
  Hung Le \\
  Deakin Applied AI Initiative, Deakin University, Australia \\
  \texttt{thai.le@deakin.edu.au} \\
  \And
  Svetha Venkatesh\\
  Deakin Applied AI Initiative, Deakin University, Australia \\
  \texttt{svetha.venkatesh@deakin.edu.au} \\
}

\begin{document}

\maketitle

\begin{abstract}
Parameter-Efficient Fine-Tuning (PEFT), particularly Low-Rank Adaptation (LoRA), has become a standard approach for adapting Large Language Models (LLMs) under limited compute. However, in continual settings where models are updated sequentially with small datasets, conventional LoRA updates struggle to balance rapid adaptation and knowledge retention. Existing methods typically treat the low-rank space as a homogeneous update region, lacking mechanisms to regulate how short-term updates are consolidated over time. We propose a continual LoRA framework with \textbf{Pro}gram memory, inspired by \textbf{C}omplementary \textbf{L}earning Systems in neuroscience. Our approach, dubbed \textbf{ProCL}, organizes LoRA adapters into structured program memory slots that are dynamically retrieved through input-conditioned attention. This enables rapid and localized adaptation, encouraging similar inputs to reuse shared adapter regions while reserving unused capacity for future data. The slots are then combined with the underlying adapter, which maintains a distributed representation that gradually accumulates knowledge across tasks to balance plasticity and stability. Our method operates entirely within the LoRA parameterization and incurs no additional inference cost. Experiments on diverse benchmarks demonstrate improved retention and reduced catastrophic forgetting over other continual LoRA strategies.
\end{abstract}

\section{Introduction}

Large language models (LLMs) have achieved strong performance across a wide range of tasks \citep{brown2020language, achiam2023gpt, le2025multi, do2025automatic, yang2025superrag}. However, adapting them to evolving real-world scenarios remains challenging, as full fine-tuning is often computationally prohibitive. Parameter-Efficient Fine-Tuning (PEFT), particularly Low-Rank Adaptation (LoRA) \citep{hulora}, addresses this by restricting updates to low-rank subspaces while freezing the backbone, enabling efficient adaptation with minimal additional parameters.

\begin{figure}[t]
    \centering
    \includegraphics[width=1\linewidth]{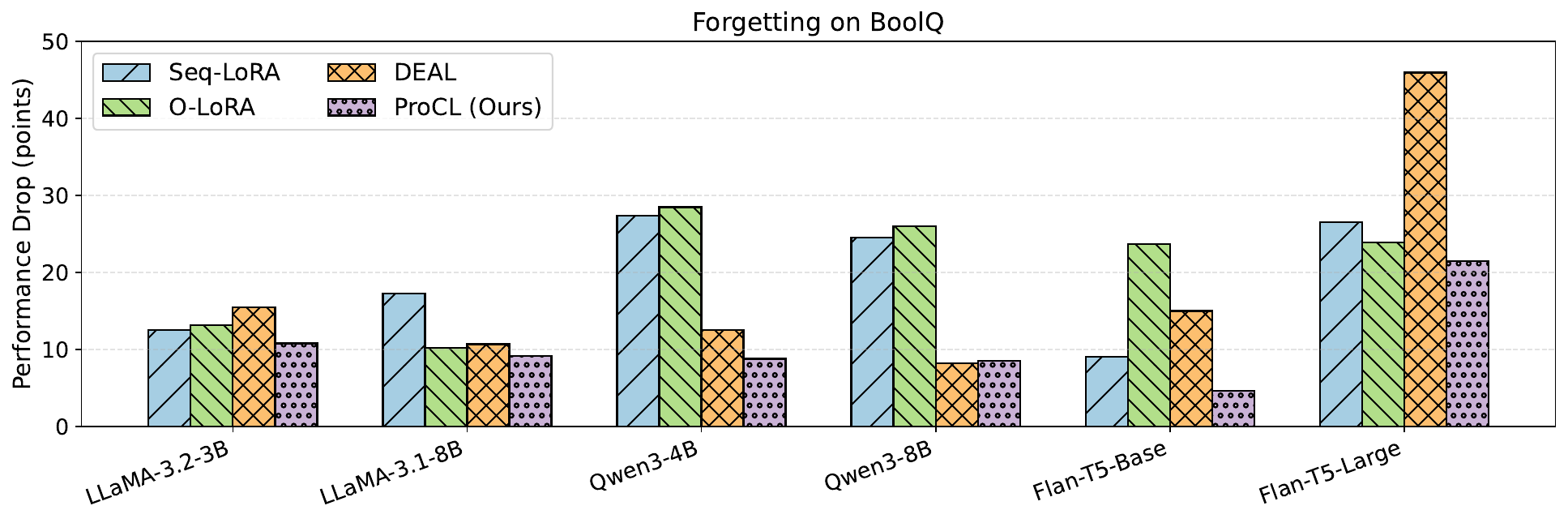}
    \caption{LLMs' catastrophic forgetting in QA tasks (lower is better). BoolQ's accuracy drops after fine-tuning on SQuAD and AdversarialQA. While all methods suffer from forgetting across different LLM backbones, our approach dominantly exhibits the smallest performance degradation.}
    \label{fig:forgetting-bool}
\end{figure}

In real-world deployment, model adaptation is rarely a one-time process; LLMs must continually incorporate new knowledge from sequential, often small-scale datasets. For instance, a clinical model fine-tuned for diagnosis may later be updated for treatment recommendation, improving new capabilities while degrading earlier ones. This highlights the stability–plasticity dilemma, where models must learn new tasks without forgetting prior knowledge \citep{mccloskey1989catastrophic}. Although LoRA constrains updates to low-rank subspaces, sequential fine-tuning can still introduce interference, leading to forgetting and reduced retention \citep{ding2024boosting, han2025data}.

Existing approaches for continual adaptation of LLMs often add adapters or task-specific modules to isolate task-relevant knowledge \citep{ren2024melora, tian2025adapters}. While effective at reducing interference, these methods increase architectural complexity, memory usage, and inference latency, making them cumbersome for long task sequences. Alternatively, regularization-based techniques constrain updates to orthogonal subspaces to reduce interference across tasks \citep{wang2023orthogonal, yang2025parameter}. While effective at mitigating forgetting, the model's effective plasticity is reduced, limiting its ability to learn new information on later tasks. Recent strategies improve retention by refining low-rank factors with wavelet kernels \citep{han2025data}. However, they still modify the shared low-rank subspace globally, without explicitly controlling how new knowledge is written, partitioned, or consolidated. As a result, LoRA updates remain vulnerable to interference, leading to degraded performance on previously learned tasks. Empirically, prior work mainly examines simple text classification tasks, which neither reflect the generative capabilities of LLMs nor fully capture catastrophic forgetting in more complex settings. For example, in question answering, we consistently observe substantial performance degradation of prior methods after only a few tasks (see Fig. \ref{fig:forgetting-bool}). These limitations raise a key question: \textit{can LoRA be equipped with a principled mechanism for efficient and robust continual learning?}

To address this challenge, we propose a memory-structured adaptation mechanism that organizes how knowledge is written into LoRA adapters during continual learning. Drawing inspiration from Complementary Learning Systems (CLS) in neuroscience \citep{mcclelland1995there, norman2003modeling}, we model continual adaptation as the interaction between rapid episodic indexing and slower distributed representations. In our framework, dubbed \textbf{ProCL}, low-rank adapters are treated as program memory \citep{leneural, le2022neurocoder}, with the adapter weights partitioned into multiple slots or programs. An input-conditioned attention mechanism dynamically retrieves and composes these programs, allowing similar inputs to activate and update shared adapter regions while routing unrelated inputs to different parts of the parameter space. This attention-based routing localizes parameter updates, reducing interference between different data regions.

At the same time, the original LoRA weight provides a stable distributed representation that reflects previously learned knowledge. During training, this weight is combined with the program-composed weights to produce the executing adapter, serving as a shared baseline that stabilizes optimization while the programs introduce input-dependent adaptations. These adaptive compositions are periodically integrated into a persistent adapter through a consolidation step, which accumulates routing-based updates over time. During inference, this learned adapter is used directly without program routing. By structuring adaptation in this way, ProCL enables selective plasticity during training while producing a compact, static model for deployment, mitigating catastrophic forgetting without introducing additional inference overhead. Experiments on text classification and question answering continual learning benchmarks demonstrate that this memory-structured adaptation mechanism significantly improves knowledge retention, cross-task generalization, and  efficiency compared to existing continual LoRA strategies.

\section{Related Work}

Continual learning (CL) aims to adapt models to new tasks sequentially without catastrophic forgetting \citep{mccloskey1989catastrophic}. Applying CL to LLMs is challenging due to their massive parameter spaces, high computational costs, and sensitivity to interference across tasks. Existing approaches for continuous LLM fine-tuning can be grouped into two main directions.  

\textbf{Adapter‑Modular and Progressive Architectures.}  
Adapter‑based and parameter‑efficient tuning approaches \citep{houlsby2019parameter, hulora} freeze the backbone and introduce task modules. Progressive or modular extensions, including LoraHub \citep{huanglorahub} and MOLE \citep{wumixture}, dynamically compose or gate multiple LoRA components for cross‑task reuse. Although these designs reduce catastrophic forgetting, they increase memory footprint and complexity as tasks accumulate, assuming knowing the tasks in inference.

\textbf{Regularization-based Methods.}  
Regularization and subspace-based approaches aim to mitigate interference across tasks. Classical methods such as EWC \citep{kirkpatrick2017overcoming} constrain updates based on parameter importance, while more recent techniques focus more on LoRA-specific constraints. For example, O-LoRA \citep{wang2023orthogonal} enforces orthogonality between low-rank updates, N-LoRA \citep{yang2025parameter} prevents parameter collisions via disjoint subspaces, OLieRA \citep{cao2025orthogonal} preserves model geometry through Lie group constraints, and DEAL \citep{han2025data} uses wavelet kernels to retain historical knowledge.  
Despite these advances, these methods lack explicit mechanisms to isolate task-specific updates, and accumulated constraints gradually exhaust the effective parameter space, limiting plasticity over long task sequences. In contrast, inspired by CLS \citep{mcclelland1995there, norman2003modeling}, our approach introduces program-memory slots to structure how task knowledge is written and consolidated, enabling selective retention and adaptability.

\textbf{Memory in CL.}  
Memory-based methods mitigate forgetting by storing past information, e.g., via experience replay \citep{rolnick2019experience} and gradient episodic memory \citep{lopez2017gradient, chaudhry2019tiny, hoanguniversal}. Another line explores program memory, where reusable weight components are stored and composed across tasks \citep{leneural, lestable, le2022neurocoder}. Related to this, mixture-of-experts (MoE, \citep{shazeer2017outrageously}) models route inputs to specialized experts, but combine \emph{outputs}, whereas program memory methods compose \emph{parameters} to form task-adaptive models. Prior approaches rely on external buffers or modules and are not tailored for parameter-efficient LLM adaptation. In contrast, we integrate program memory directly into LoRA, enabling structured storage, routing, and consolidation within the adapter without external data buffers.

\section{Background}

\textbf{LoRA-Based Fine-tuning.} LoRA (Low-Rank Adaptation) enhances LLMs by introducing low-rank matrices to parameter updates during fine-tuning. 
Given a pre-trained weight matrix $W_0 \in \mathbb{R}^{D_1 \times D_2}$, LoRA decomposes the weight update $\Delta W$ into two smaller matrices $A \in \mathbb{R}^{D_1 \times R}$ and $B \in \mathbb{R}^{D_2 \times R}$, where $R \ll \min\{D_1,D_2\}$. The update is expressed as:
\begin{equation}
\Delta W = A B^\top
\end{equation}
This reduces the number of trainable parameters from $D_1 \cdot D_2$ to $D_1 \cdot R + D_2 \cdot R$, enabling efficient adaptation to new tasks with significantly lower computational cost than full fine-tuning. In practice, the adapted linear transformation is applied as $y = x(W_0 + \Delta W)^\top$. Note that the factorization $\Delta W = AB^\top$ is equivalent to the commonly used form $BA$ up to transposition conventions, and does not affect the forward computation. In our method, we operate on a generic low-rank factor (e.g., $A$ or $B^\top$), which we denote as $W$ for simplicity, without loss of generality.

\textbf{Continual Learning with LoRA.} Formally, in the general continual learning setting, consider a sequence of tasks $\mathcal{T}_1, \dots, \mathcal{T}_t$ with corresponding datasets $\mathcal{D}_1, \dots, \mathcal{D}_t$. The model is presented with the task datasets sequentially, one at a time.
The standard objective is to learn a model that performs well across all seen tasks. At task $t$, the objective can be expressed as:

\begin{equation}
\min_{\theta_1, \dots, \theta_t} \frac{1}{t} \sum_{i=1}^t \mathcal{L}(\theta_i, \mathcal{D}_i),
\end{equation}

where $\theta_i$ denotes the model parameters after learning task $\mathcal{T}_i$, and $\mathcal{L}$ is the task-specific loss. 

In the context of LoRA-based fine-tuning, the model parameters $\theta_i$ at task $i$  are decomposed into a fixed pre-trained weight $W_0$ and low-rank adaptation matrices $A_i, B_i$ for each task. The continual learning objective then becomes:

\begin{equation}
\min_{A_1, B_1, \dots, A_t, B_t} \frac{1}{t} \sum_{i=1}^t \mathcal{L}\big(W_0 + A_i B_i^\top, \mathcal{D}_i\big),
\end{equation}

Here, the goal is to find low-rank matrices $A_i$ and $B_i$ that allow the model to efficiently adapt to each new task while preserving performance on all previously learned tasks, naturally reflecting the stability–plasticity trade-off central to continual learning. For example, a simple method (Sequential LoRA) shares the same adapters across tasks, i.e., they are initialized from the previously learned matrices: $A_i \gets A_{i-1}$ and $B_i \gets B_{i-1}$, to better accumulate knowledge from past tasks.

\section{Method}

We propose \textbf{ProCL}, a \textbf{Pro}gram Memory framework for LoRA inspired by \textbf{C}omplementary \textbf{L}earning systems. ProCL addresses catastrophic forgetting in continual fine-tuning by replacing a single monolithic LoRA adapter with a library of small specialist weight matrices, termed programs. The input is compressed into a summary vector and passed through a lightweight encoder to produce a data fingerprint, which is compared against learnable program keys via multi-head attention to produce soft routing weights. These weights determine how programs are combined to form the final adapter. As training progresses, different tasks activate different programs, reducing gradient interference; we show formally that such interference is attenuated and approaches zero as routing specialisation sharpens. The final adapter combines a frozen copy of the original adapter with the dynamically routed program blend, preventing overwriting of prior knowledge. Analogous to biological memory consolidation, a moving average step folds the executed adapter back into the persistent weight, gradually integrating new knowledge into a stable representation. The process is depicted in Fig.~\ref{fig:procl}. 

\begin{figure}[t]
\centering
\includegraphics[width=\linewidth]{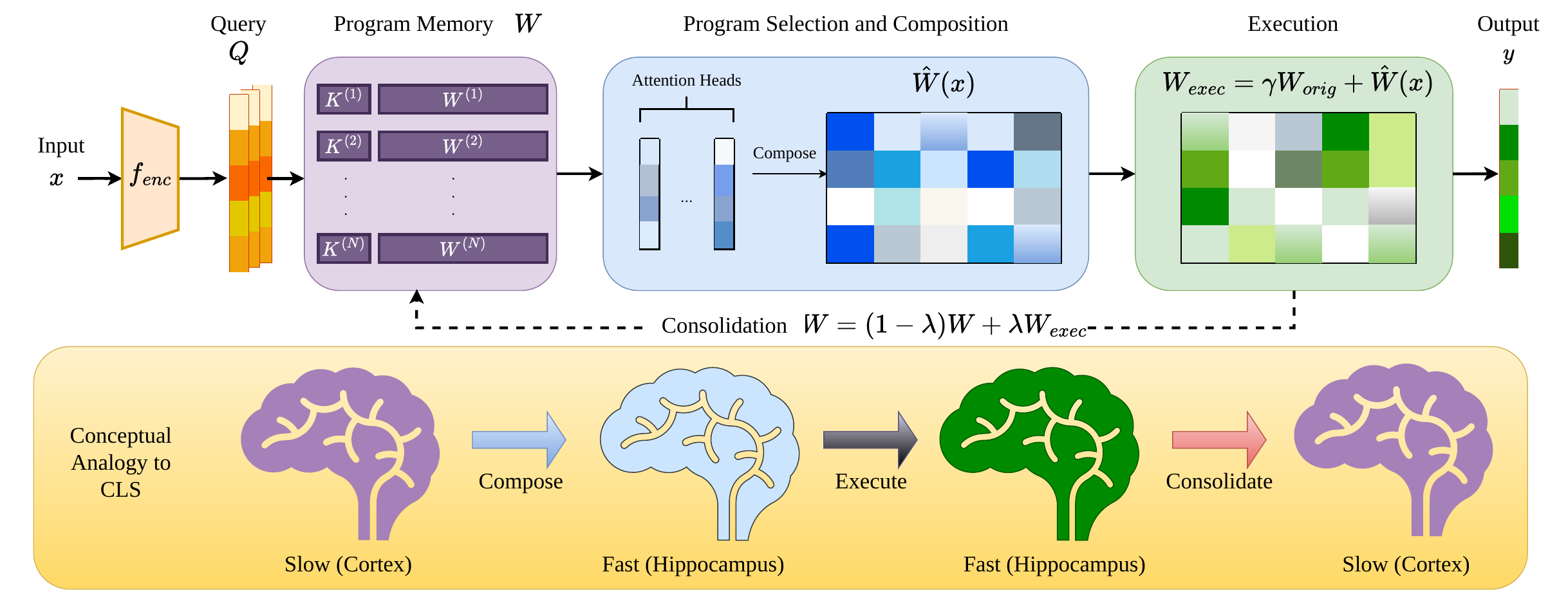}
\caption{Overview of ProCL. Given an input $x$, the model queries relevant parameter slots in the Program Memory $W$. Through attention, selected programs are composed into an input-conditioned weight $\hat{W}(x)$, which is combined with the base weight for execution to produce the output $y$. The resulting execution weight is then distilled back into the memory via consolidation. This process is conceptually inspired by Complementary Learning Systems (CLS).}
\label{fig:procl}
\end{figure}

\subsection{Program Memory as Structured Low-Rank Adapters}

Consider a low-rank adapter weight:
\begin{equation}
W \in \mathbb{R}^{R \times D},
\end{equation}
where $R$ is the LoRA rank and $D$ the other dimension. Instead of treating $W$ as a monolithic matrix, we partition it into $N$ programs:
\begin{equation}
W = \left[W^{(1)}, \dots, W^{(N)}\right],
\quad
W^{(n)} \in \mathbb{R}^{r \times D},
\quad
r = R / N.\label{eq:pm}
\end{equation}
Here, $n=1,\dots,N$ with $N$ as a hyperparameter dividable by $R$. Each program $W^{(n)}$ serves as a memory slot that stores data-relevant adaptations. We use the terms \emph{program} and \emph{memory slot} interchangeably throughout the paper. Importantly, the adapter weight $W$ remains the only persistent parameter saved by LoRA frameworks. Program memory is not a separate parameter set; it is an execution-time abstraction over the same adapter tensor $W$, ensuring full compatibility with existing LoRA implementations.

\subsection{Input-Conditioned Program Selection}

During training, given a training sample: $x \in \mathbb{R}^{T \times D}$, where $T$ and $D$ are the number of tokens and embedding size, respectively, we compute an input representation via mean pooling:
\begin{equation}
z = \frac{1}{T} \sum_{t=1}^{T} x_t.
\end{equation}

A lightweight task encoder produces task keys:
\begin{equation}
Q = f_{\text{enc}}(z) \in \mathbb{R}^{N \times d_k}.
\end{equation}

Inspired by \citet{leneural}, we use the task queries to perform attention over the programs. Attending to slot-based memory resembles the episodic retrieval mechanism in CLS \citep{le2021model, whittington2025tale}. Since each attention operation selects a program $W^{(n)} \in \mathbb{R}^{r \times D}$, we employ $R / r=N$ parallel attention heads to retrieve $N$ programs, which are then concatenated to form the executed adapter. Each head $h\in[1,N]$ maintains a set of learnable key vectors $K_h=\{K_h^{1},\dots,K_h^{N}\}\in \mathbb{R}^{N \times d_k}$ associated with the set of programs in Eq. \ref{eq:pm}. The attention weights for head $h$ over memory slots are computed as:
\begin{equation}
\alpha_h = \text{Softmax}\left(Q[h] K_h^\top\right),
\end{equation}
where $\alpha_h \in \mathbb{R}^{N}$. To improve stability during training, we aggregate attention across the batch:
\begin{equation}
\bar{\alpha}_h = \frac{1}{B}\sum_{b=1}^{B} \alpha_h^{(b)}.
\end{equation}

This yields a batch-level routing distribution that remains input-adaptive across batches while reducing variance from per-sample fluctuations. Since batches in continual fine-tuning predominantly contain samples from the same task, this aggregation preserves task-conditioned routing behavior where different tasks still activate different programs. The effective program for head $h$ is obtained via a soft composition over memory slots:
\begin{equation}
\hat{W}_h =
\sum_{n=1}^{N}
\bar{\alpha}_{h,n}
\cdot
\sigma(s_n)
\cdot
W^{(n)},
\end{equation}
where $s_n$ is a learnable scaling parameter and $\sigma(\cdot)$ ensures stable gating.
We emphasize that this operation performs a \emph{soft selection} over programs, allowing multiple memory slots to contribute to the composed adapter. Then, all head-specific programs are concatenated:
\begin{equation}
\hat{W} = \left[\hat{W}_1, \dots, \hat{W}_N\right].\label{eq:What}
\end{equation}

Theoretically, our proposed mechanism reduces cross-task gradient interference by routing gradient updates through input-conditioned programs. The theoretical result is summarized below:

\begin{tcolorbox}[colback=gray!5,colframe=black]
\textbf{Remark 1: Program Routing Bounds Cross-Task Gradient Interaction.}
Let $J(t, t') := |\langle G^t, G^{t'} \rangle_F|$ denote the magnitude
of cross-task gradient interaction. ProCL satisfies
$J_{\mathrm{ProCL}}(t, t') \le (\max_h \beta_h)\, S(t, t')$, where
$\beta_h \in [0,1]$ is the routing-induced gating coefficient and
$S(t, t') := \sum_h |\langle \tilde{g}_{h,t}, \tilde{g}_{h,t'}
\rangle_F|$. Under routing specialization (A1), $\max_h \beta_h \to 0$
as task-specific routing distributions become disjoint, driving
$J_{\mathrm{ProCL}}(t, t') \to 0$.
\end{tcolorbox}

\begin{proof}[Proof sketch]
Each program receives a gradient scaled by
$\beta_h = \sum_n \sigma(s_{n,t}) \sigma(s_{n,t'})\, \bar{\alpha}_{h,n,t}\,
\bar{\alpha}_{h,n,t'}$, with $0 \le \beta_h \le 1$ since routing
weights are normalised and $\sigma(\cdot) < 1$. The triangle inequality
then bounds the cross-task interaction magnitude by $(\max_h \beta_h)\,
S(t, t')$. Under (A1), routing distributions concentrate on disjoint
subsets, forcing $\beta_h \to 0$. See Appendix~\ref{app:gradient_interference} for full theory and proof.
\end{proof}

\subsection{Stable–Adaptive Weight Composition and Consolidation}\label{subsec:Combining}

The composed programs are aggregated with the original adapter to form the executing adapter:
\begin{equation}
W_{\text{exec}} = \gamma W_{\text{orig}} + \hat{W}(x),
\end{equation}
where \(W_{\text{orig}}\) denotes the initial LoRA adapter weight before task-specific training, \(\hat{W}(x)\) is the program-composed weight produced by the input-conditioned routing mechanism (Eq. \ref{eq:What}). $\gamma$ is a learnable scaling parameter controlling the contribution of the original adapter. It is initialized per-layer as $\gamma_0 = \sigma\!\left(-\log \mathrm{RMS}(W)\right)$, where $\mathrm{RMS}(W)$ denotes the root-mean-square of the layer weights. This norm-adaptive scheme assigns smaller $\gamma$ to large-norm layers and larger $\gamma$ to small-norm layers, balancing their contributions, constrained to $(0,1)$ via the sigmoid function $\sigma$. The executing adapter $W_{\text{exec}}$ is used in the same manner as standard LoRA to produce the output $y$.

In this formulation, \(W_{\text{orig}}\) provides a stable representation of previously acquired knowledge, while \(\hat{W}\) introduces fast, input-specific adaptations through the selected programs. The coefficient $\gamma$ influences the stability–plasticity trade-off: smaller values allow greater flexibility for new information, while larger values generally emphasize retention of prior knowledge. This combination enables the model to adapt to new tasks while maintaining compatibility with previously learned representations, with \(W_{\text{orig}}\) serving as a stable reference that regularizes the optimization process.

During training, $W_{\text{orig}}$ is kept frozen and is not updated through backpropagation. Instead, gradient signals flow only through the programs $\hat{W}$, which is constructed from the adapter weight $W$. As a result, the adapter learns to produce adjustments relative to the fixed baseline provided by $W_{\text{orig}}$. This design stabilizes learning by preventing destructive updates to prior knowledge while allowing the programs to capture new information through localized modifications of the adapter weight.

While enabling rapid adaptation, the execution alone does not ensure stable retention of newly acquired knowledge and may introduce additional computation due to the construction of $W_{\text{exec}}$, affecting inference efficiency. To address this, we introduce a consolidation step:
\begin{equation}
W \leftarrow (1 - \lambda) W + \lambda W_{\text{exec}},
\end{equation}
where $\lambda \in [0,1]$ controls the consolidation rate. This update compresses input-conditioned program compositions into the persistent weight $W$. Specifically, $W_{\text{exec}} = \gamma W_{\text{orig}} + \hat{W}(x)$ captures fast, input-dependent adaptations, while $W$ accumulates their average effect over time.
This mirrors CLS theory: routing enables rapid, context-specific updates (fast memory), and consolidation distills them into a stable parametric representation (slow memory). As a result, $W$ evolves via both gradient updates and consolidation, acting as a compact summary of program-memory-based learning.

\begin{tcolorbox}[colback=gray!5,colframe=black]
\textbf{Remark 2: Consolidation Approximates the Expected Executing Adapter.}
Let $\Delta(x) := \hat{W}(x)$ denote the routed program composition, and define
\[
W^{*} := \gamma W_{\mathrm{orig}} + \mathbb{E}_{x \sim \mathcal{D}}[\Delta(x)].
\]
Under late-training stationarity of the routing map and bounded $\Delta(x)$,
the consolidation update drives $W$ toward $W^{*}$ in expectation within
each task.
\end{tcolorbox}

\begin{proof}[Proof sketch]
Recall that
\[
\Delta(x_t)=\sum_{n=1}^{N}\alpha_n(x_t)\sigma(s_n)W^{(n)}.
\]
Under late-training stationarity, $\mathbb{E}[\Delta(x)]$ is fixed. Using
\[
W^{(t+1)}=(1-\lambda)W^{(t)}
+\lambda\bigl(\gamma W_{\mathrm{orig}}+\Delta(x_t)\bigr),
\]
and defining $E^{(t)}=W^{(t)}-W^*$, where
$W^*=\gamma W_{\mathrm{orig}}+\mathbb{E}[\Delta(x)]$, we obtain
\[
E^{(t+1)}
=(1-\lambda)E^{(t)}
+\lambda\bigl(\Delta(x_t)-\mathbb{E}[\Delta(x)]\bigr).
\]
Taking expectation gives
\[
\mathbb{E}[E^{(t+1)}]=(1-\lambda)\mathbb{E}[E^{(t)}],
\]
so $\mathbb{E}[E^{(t)}]\to 0$ for $0<\lambda<1$, implying
$\mathbb{E}[W^{(t)}]\to W^*$. See Appendix~\ref{app:consolidation}.
\end{proof}

\textbf{Efficient Inference.} At inference time, we discard the routing mechanism and directly use the consolidated adapter $W$ (i.e., $W_{\text{exec}} = W$). The consolidation step integrates the executing adapters encountered during training into a single persistent parameter, allowing $W$ to approximate the average effect of input-conditioned program composition (Remark~2). This eliminates the need for program selection and storage for $W_{orig}$ at test time, while preserving the benefits of program-based learning.

\section{Experimental Results}

\subsection{Experimental Setup}

\begin{table*}[t]
\centering
\begin{tabular}{lccccc>{\columncolor{lightblue}}c}
\toprule
\textbf{LLM} & \textbf{Seq-LoRA} & \textbf{EWC} & \textbf{Replay} & \textbf{O-LoRA} & \textbf{DEAL} & \textbf{ProCL (Ours)} \\
\midrule
Llama-3.2-3B 
& 67.7 {\scriptsize$\pm$ 0.9} 
& 68.1 {\scriptsize$\pm$ 1.2}
& 67.6 {\scriptsize$\pm$ 1.1} 
& 68.1 {\scriptsize$\pm$ 0.3} 
& 66.4 {\scriptsize$\pm$ 1.6} 
& \textbf{69.8 {\scriptsize$\pm$  0.6}} \\

Llama-3.1-8B 
& 64.5 {\scriptsize$\pm$ 1.4} 
& 65.3 {\scriptsize$\pm$ 1.1} 
& 62.8 {\scriptsize$\pm$ 1.3} 
& \textbf{72.4 {\scriptsize$\pm$ 0.4}} 
& \textbf{72.1 {\scriptsize$\pm$ 2.1}} 
& \textbf{73.2 {\scriptsize$\pm$ 0.9}} \\
\midrule

Qwen3-4B 
& 52.5 {\scriptsize$\pm$ 1.9} 
& 54.5 {\scriptsize$\pm$ 0.6} 
& 54.8 {\scriptsize$\pm$ 1.9} 
& 56.6 {\scriptsize$\pm$ 0.2} 
& 68.2 {\scriptsize$\pm$ 0.7} 
& \textbf{72.5 {\scriptsize$\pm$ 0.2}} \\

Qwen3-8B 
& 59.1 {\scriptsize$\pm$ 1.7} 
& 58.7 {\scriptsize$\pm$ 0.8} 
& 58.8 {\scriptsize$\pm$ 1.5} 
& 58.6 {\scriptsize$\pm$ 7.2} 
& \textbf{72.9 {\scriptsize$\pm$ 1.5}} 
& \textbf{72.8 {\scriptsize$\pm$ 2.4}} \\
\midrule

Flan-T5-Base 
& 61.3 {\scriptsize$\pm$ 2.2} 
& 60.9 {\scriptsize$\pm$ 1.1} 
& 60.6 {\scriptsize$\pm$ 1.3} 
& 54.2 {\scriptsize$\pm$ 0.7} 
& 61.4 {\scriptsize$\pm$ 1.9} 
& \textbf{62.8 {\scriptsize$\pm$ 1.0}} \\

Flan-T5-Large 
& \textbf{62.0 {\scriptsize$\pm$ 1.7}} 
& 62.6  {\scriptsize$\pm$ 0.9} 
& \textbf{62.8 {\scriptsize$\pm$ 1.2}} 
& 61.6 {\scriptsize$\pm$ 1.0} 
& 55.9 {\scriptsize$\pm$ 8.7} 
& \textbf{63.4 {\scriptsize$\pm$ 1.7}} \\
\midrule

\textbf{Average} 
& 61.2 {\scriptsize$\pm$ 1.6}
& 61.7 {\scriptsize$\pm$ 1.0}
& 61.2 {\scriptsize$\pm$ 1.4}
& 61.9 {\scriptsize$\pm$ 1.6}
& 66.2 {\scriptsize$\pm$ 2.8}
& \textbf{69.1 {\scriptsize$\pm$ 1.1}} \\
\bottomrule
\end{tabular}
\caption{Final average accuracy across continual learning rounds on QA benchmarks. Bold denotes the best method, or those within a small effect size of it (Cohen's $d < 0.5$).}
\label{tab:qa_r}
\end{table*}

\textbf{Benchmarks.} The benchmark consists of two continual learning settings. First, we construct a \textit{question answering} (QA) setting with three QA datasets (BoolQ$\rightarrow$SQuAD$\rightarrow$AdversarialQA) presented sequentially, each sampled to 5,000 training and 1,000 test examples. They are ordered to maximise the distribution shift, increasing difficulty. Since each task requires the model to generate an answer string matched against a ground-truth response, we report the mean per-CL-round accuracy, averaged first across tasks within each continual learning round and then across all rounds. Second, following \citet{han2025data}, we adopt a \textit{text classification} (TC) setting with three curricula of increasing length: 3-task, 4-task, and 15-task. For this setting, we report Average Accuracy (AA), the mean classification accuracy across tasks per continual learning round averaged over all rounds, and ROUGE-1 (R-1), the unigram F1 overlap between model outputs and ground-truth labels, computed in the same manner.

\textbf{Baselines.} We evaluate ProCL against modern CL baselines: Sequential LoRA (Seq-LoRA), which initialises 
adapters from the previously learned matrices; O-LoRA~\citep{wang2023orthogonal}, which enforces 
orthogonality between task adapters; and DEAL~\citep{han2025data}, a state-of-the-art continual 
learning method for LoRA. For reference, in QA tasks, we also include classical continual learning methods adapted to LoRA parameters, including EWC \cite{kirkpatrick2017overcoming} and Replay \cite{rolnick2019experience}. For TC tasks, prior work has shown that modern CL methods consistently outperform classical approaches \cite{han2025data}; therefore, we omit the classical baselines to reduce computational cost.

\paragraph{Implementation details.} We experiment with three LLM families spanning a range of scales: 
Llama3 (3B, 8B), Qwen3 (4B, 8B), and Flan-T5 (Base, Large) (see Appendix Table \ref{tab:llm}). All baselines were re-run under our unified training and evaluation pipeline based on DEAL's official codebase, with the same LoRA rank and common hyperparameters across methods to ensure fair comparison. The experiments were conducted on either a single NVIDIA V100 GPU with 40GB memory or a single NVIDIA H100 GPU with 80GB memory.  Results are averaged over 3 runs. Statistical ties are declared using Cohen's $d < 0.5$ (small effect size). Full implementation details, including training configurations, hyperparameters, and dataset names, are provided in Appendix \ref{app:core-hparams}. For ProCL, if not stated otherwise, the default hyperparameters are $N=4$, $k_d=16$, and $\lambda=0.9$.

\subsection{Benchmarking Results}

\begin{table}[t]
\centering
\small
\setlength{\tabcolsep}{3.2pt}
\begin{tabular}{l c cc cc cc c}
\toprule
\multirow{2}{*}{Method} & \multirow{2}{*}{LLM} 
& \multicolumn{2}{c}{3-Task (TC)} 
& \multicolumn{2}{c}{4-Task (Standard)} 
& \multicolumn{2}{c}{15-Task (Large)} 
& \multirow{2}{*}{\begin{tabular}{c}Avg\\(AA)\end{tabular}} \\
& & AA & R-1 & AA & R-1 & AA & R-1 & \\
\midrule

Seq-LoRA & \multirow{4}{*}{\makecell[c]{LLaMA-3.1\\8B}}
& 60.8 {\scriptsize$\pm$ 0.2} & 62.1 {\scriptsize$\pm$ 0.5} 
& 55.3 {\scriptsize$\pm$ 0.3} & 57.0 {\scriptsize$\pm$ 0.6} 
& 49.6 {\scriptsize$\pm$ 0.2} & 52.4 {\scriptsize$\pm$ 0.4} 
& 55.2 {\scriptsize$\pm$ 0.3} \\
O-LoRA & 
& 59.7 {\scriptsize$\pm$ 0.5} & 61.0 {\scriptsize$\pm$ 0.3} 
& 53.4 {\scriptsize$\pm$ 0.4} & 55.2 {\scriptsize$\pm$ 0.7} 
& 48.9 {\scriptsize$\pm$ 0.2} & \textbf{52.8 {\scriptsize$\pm$ 0.5}} 
& 54.0 {\scriptsize$\pm$ 0.4} \\
DEAL & 
& 61.5 {\scriptsize$\pm$ 0.3} & \textbf{62.9 {\scriptsize$\pm$ 0.6}} 
& 56.4 {\scriptsize$\pm$ 0.2} & 58.2 {\scriptsize$\pm$ 0.4} 
& 50.8 {\scriptsize$\pm$ 0.5} & 52.6 {\scriptsize$\pm$ 0.3} 
& 56.2 {\scriptsize$\pm$ 0.3} \\
\rowcolor{lightblue}ProCL (Ours) & 
& \textbf{62.6 {\scriptsize$\pm$ 0.2}} & \textbf{62.5 {\scriptsize$\pm$ 0.7}} 
& \textbf{57.9 {\scriptsize$\pm$ 0.3}} & \textbf{59.6 {\scriptsize$\pm$ 0.5}} 
& \textbf{52.1 {\scriptsize$\pm$ 0.1}} & \textbf{53.7 {\scriptsize$\pm$ 0.4}} 
& \textbf{57.5 {\scriptsize$\pm$ 0.2}} \\
\midrule

Seq-LoRA & \multirow{4}{*}{\makecell[c]{LLaMA-3.2\\3B}}
& 56.1 {\scriptsize$\pm$ 0.6} & 57.8 {\scriptsize$\pm$ 0.4} 
& 50.4 {\scriptsize$\pm$ 0.2} & 52.9 {\scriptsize$\pm$ 0.5} 
& 45.2 {\scriptsize$\pm$ 0.3} & 48.0 {\scriptsize$\pm$ 0.6} 
& 50.6 {\scriptsize$\pm$ 0.3} \\
O-LoRA & 
& 54.8 {\scriptsize$\pm$ 0.5} & 56.5 {\scriptsize$\pm$ 0.3} 
& 48.3 {\scriptsize$\pm$ 0.4} & 50.6 {\scriptsize$\pm$ 0.2} 
& 44.1 {\scriptsize$\pm$ 0.7} & 48.5 {\scriptsize$\pm$ 0.4} 
& 49.1 {\scriptsize$\pm$ 0.5} \\
DEAL & 
& 56.9 {\scriptsize$\pm$ 0.2} & \textbf{58.6 {\scriptsize$\pm$ 0.5}} 
& 51.3 {\scriptsize$\pm$ 0.6} & 53.8 {\scriptsize$\pm$ 0.3} 
& 46.0 {\scriptsize$\pm$ 0.1} & 48.2 {\scriptsize$\pm$ 0.6} 
& 51.4 {\scriptsize$\pm$ 0.2} \\
\rowcolor{lightblue}ProCL (Ours) & 
& \textbf{58.0 {\scriptsize$\pm$ 0.3}} & 58.2 {\scriptsize$\pm$ 0.6} 
& \textbf{52.9 {\scriptsize$\pm$ 0.2}} & \textbf{55.1 {\scriptsize$\pm$ 0.5}} 
& \textbf{47.3 {\scriptsize$\pm$ 0.3}} & \textbf{49.4 {\scriptsize$\pm$ 0.4}} 
& \textbf{52.7 {\scriptsize$\pm$ 0.2}} \\
\midrule

Seq-LoRA & \multirow{4}{*}{\makecell[c]{Flan-T5\\Large}}
& 86.4 {\scriptsize$\pm$ 0.2} & 88.1 {\scriptsize$\pm$ 0.3} 
& 77.2 {\scriptsize$\pm$ 0.7} & 81.2 {\scriptsize$\pm$ 0.4} 
& 72.6 {\scriptsize$\pm$ 0.2} & 77.8 {\scriptsize$\pm$ 0.6} 
& 78.7 {\scriptsize$\pm$ 0.3} \\
O-LoRA & 
& 85.0 {\scriptsize$\pm$ 0.6} & 86.9 {\scriptsize$\pm$ 0.5} 
& 71.0 {\scriptsize$\pm$ 0.7} & 73.1 {\scriptsize$\pm$ 0.6} 
& 70.6 {\scriptsize$\pm$ 0.4} & \textbf{80.1 {\scriptsize$\pm$ 0.5}} 
& 75.5 {\scriptsize$\pm$ 0.6} \\
DEAL & 
& 87.5 {\scriptsize$\pm$ 0.2} & \textbf{89.2 {\scriptsize$\pm$ 0.4}} 
& 78.3 {\scriptsize$\pm$ 0.5} & 82.3 {\scriptsize$\pm$ 0.3} 
& 73.7 {\scriptsize$\pm$ 0.1} & 78.9 {\scriptsize$\pm$ 0.5} 
& 79.8 {\scriptsize$\pm$ 0.2} \\
\rowcolor{lightblue}ProCL (Ours) & 
& \textbf{88.3 {\scriptsize$\pm$ 0.3}} & \textbf{88.9 {\scriptsize$\pm$ 0.6}} 
& \textbf{81.0 {\scriptsize$\pm$ 0.4}} & \textbf{85.7 {\scriptsize$\pm$ 0.5}} 
& \textbf{74.6 {\scriptsize$\pm$ 0.2}} & \textbf{80.0 {\scriptsize$\pm$ 0.4}} 
& \textbf{81.3 {\scriptsize$\pm$ 0.3}} \\
\midrule

Seq-LoRA & \multirow{4}{*}{\makecell[c]{Flan-T5\\Base}}
& 75.7 {\scriptsize$\pm$ 0.3} & 77.5 {\scriptsize$\pm$ 0.5} 
& 66.8 {\scriptsize$\pm$ 0.6} & 76.3 {\scriptsize$\pm$ 0.4} 
& 55.6 {\scriptsize$\pm$ 0.2} & 63.0 {\scriptsize$\pm$ 0.3} 
& 66.0 {\scriptsize$\pm$ 0.4} \\
O-LoRA & 
& 75.8 {\scriptsize$\pm$ 0.5} & 77.6 {\scriptsize$\pm$ 0.2} 
& 66.4 {\scriptsize$\pm$ 0.3} & 76.0 {\scriptsize$\pm$ 0.6} 
& 53.4 {\scriptsize$\pm$ 0.4} & 62.0 {\scriptsize$\pm$ 0.2} 
& 65.2 {\scriptsize$\pm$ 0.3} \\
DEAL & 
& 77.0 {\scriptsize$\pm$ 0.2} & 78.7 {\scriptsize$\pm$ 0.6} 
& 67.2 {\scriptsize$\pm$ 0.3} & 76.8 {\scriptsize$\pm$ 0.2} 
& 56.2 {\scriptsize$\pm$ 0.5} & 63.4 {\scriptsize$\pm$ 0.3} 
& 66.8 {\scriptsize$\pm$ 0.3} \\ 
\rowcolor{lightblue}ProCL (Ours) & 
& \textbf{78.1 {\scriptsize$\pm$ 0.4}} & \textbf{79.8 {\scriptsize$\pm$ 0.5}} 
& \textbf{68.2 {\scriptsize$\pm$ 0.3}} & \textbf{77.8 {\scriptsize$\pm$ 0.2}} 
& \textbf{57.3 {\scriptsize$\pm$ 0.6}} & \textbf{64.4 {\scriptsize$\pm$ 0.4}} 
& \textbf{67.9 {\scriptsize$\pm$ 0.4}} \\
\bottomrule
\end{tabular}
\vspace{6pt}
\caption{Final average accuracy across continual learning rounds on text classification benchmarks. Results are reported as mean $\pm$ standard deviation over 3 runs. Avg (AA) denotes the mean accuracy across the three curricula. Bold denotes the best method, or all methods within a small effect size of it (Cohen's $d < 0.5$).}
\label{tab:continual_learning}
\end{table}

\textbf{QA Benchmarks.} Table~\ref{tab:qa_r} demonstrates that ProCL achieves the best overall performance with an average accuracy of 69.1, outperforming all baselines by a clear margin (e.g., +2.9 over DEAL and +7.9 over Seq-LoRA). While DEAL is competitive on some larger models, its performance is inconsistent across architectures. In contrast, ProCL delivers stable gains across all models, with particularly strong improvements in more challenging settings such as Qwen3-4B. These results highlight the robustness and effectiveness of ProCL in balancing stability and plasticity.

\textbf{Text Classification Benchmarks.}
Table~\ref{tab:continual_learning} shows that ProCL consistently achieves the best or second-best results across curricula, model scales, and both AA and R-1 metrics. Its advantage is more evident on longer task sequences: on the 15-task curriculum, ProCL improves AA over the strongest baseline, DEAL, by approximately 0.9--1.3\% across models. The gains are also clear in the 4-task setting, where ProCL outperforms DEAL by 2.7\% on Flan-T5-Large (81.0 vs.\ 78.3) and 1.0\% on Flan-T5-Base (68.2 vs.\ 67.2). Even in the shorter 3-task setting, ProCL maintains a consistent advantage, suggesting stronger early-stage retention. Across architectures, Flan-T5 generally achieves higher absolute accuracy than LLaMA, likely because its encoder--decoder design is better aligned with classification-style objectives; nevertheless, ProCL remains effective on both model families.

\subsection{Ablation Study}

In Table~\ref{tab:ablation}, we analyze the contribution of each component of ProCL across multiple backbone models on the QA benchmark. Removing the original adapter weight $W_{\mathrm{orig}}$ leads to substantial performance degradation on LLaMA-3.2-3B ($69.8 \rightarrow 65.9$) and Qwen3-4B ($72.5 \rightarrow 67.1$), highlighting the importance of maintaining a stable reference representation during continual adaptation. On Flan-T5-Base, disabling consolidation causes the largest drop ($62.8 \rightarrow 60.1$), indicating that transferring the dynamically composed execution weights into a persistent adapter is particularly important for encoder--decoder architectures. Replacing learned routing with random or uniform routing also consistently degrades performance across models, confirming that input-conditioned program selection is essential for reducing interference and improving retention.

\begin{table}[t]
\centering
\small
\begin{tabular}{lcccc>{\columncolor{lightblue}}c}
\toprule
Configuration 
& w/o $W_{\mathrm{orig}}$ 
& w/o Cons. 
& Random $\alpha$ 
& Uniform $\alpha$
& Full \\
\midrule
LLaMA-3.2-3B 
& 65.9 {\scriptsize$\pm$ 0.3} 
& 68.2 {\scriptsize$\pm$ 1.0} 
& 67.5 {\scriptsize$\pm$ 1.4} 
& 69.0 {\scriptsize$\pm$ 1.6} 
& \textbf{69.8 {\scriptsize$\pm$ 0.6}} \\
Qwen3-4B 
& 67.1 {\scriptsize$\pm$ 0.4} 
& 69.4 {\scriptsize$\pm$ 0.8} 
& 68.6 {\scriptsize$\pm$ 1.2} 
& 70.1 {\scriptsize$\pm$ 1.3} 
& \textbf{72.5 {\scriptsize$\pm$ 0.2}} \\
Flan-T5-Base 
& 61.6 {\scriptsize$\pm$ 0.8} 
& 60.1 {\scriptsize$\pm$ 1.3} 
& 61.8 {\scriptsize$\pm$ 1.2} 
& 60.4 {\scriptsize$\pm$ 0.9} 
& \textbf{62.8 {\scriptsize$\pm$ 1.0}} \\
\bottomrule

\end{tabular}
\vspace{6pt}
\caption{ProCL ablation on the QA benchmark across different backbone models.}
\label{tab:ablation}
\end{table}

\subsection{Model Analysis}\label{subsec:modelanal}

\textbf{Hyperparameter Sensitivity.}
Figure~\ref{fig:hptime} (left) reports the sensitivity of ProCL to the number of programs ($N$), key dimension ($d_k$), and consolidation strength ($\lambda$) on the QA benchmark using LLaMA-3.2-3B as the representative model. The default configuration, $N=4$, $d_k=16$, and $\lambda=0.9$, achieves the best performance of $69.78\pm0.60$. Using only a single program significantly reduces performance to $62.87\pm3.18$, while increasing the memory size to $N=16$ also degrades accuracy to $66.29\pm1.70$, suggesting that moderate program capacity is preferable. Similarly, $d_k=16$ outperforms both smaller and larger key dimensions, indicating a balance between expressiveness and stability. For consolidation, large but non-extreme values of $\lambda$ remain consistently effective, whereas full replacement updates ($\lambda=1$) noticeably reduce performance to $67.09\pm0.84$. Overall, ProCL is relatively robust to variations in $k_d$ and $\lambda$, with the best results achieved using balanced hyperparameter settings.

\begin{figure}[t]
    \centering\small
    \includegraphics[width=1\linewidth]{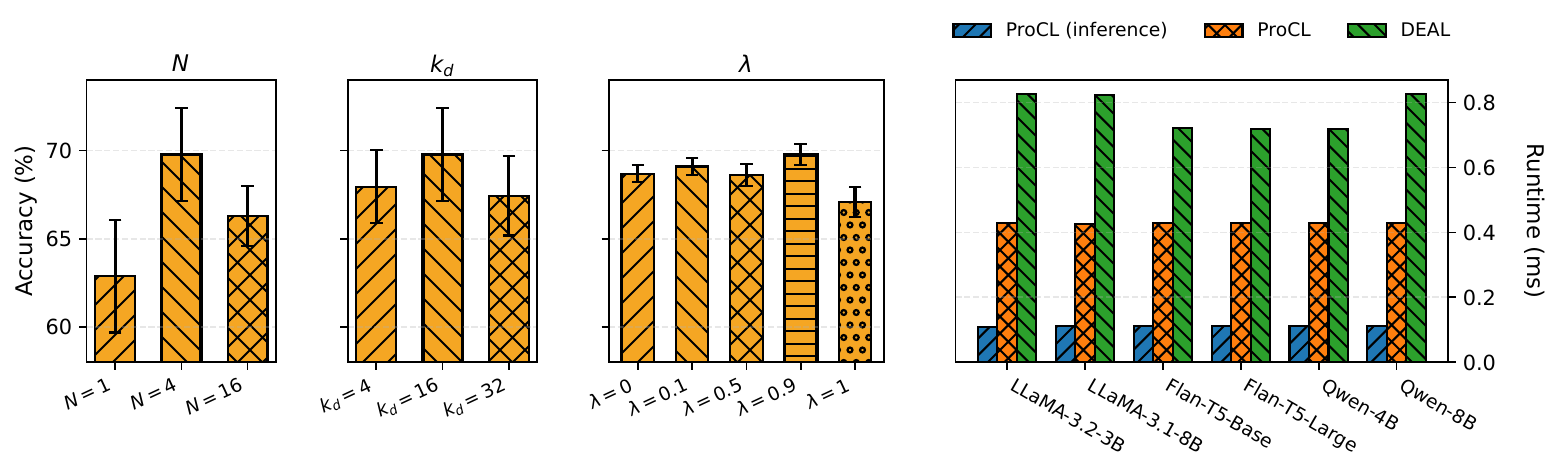}
    \caption{Left: Hyperparameter sensitivity of ProCL on QA benchmark. Right: Average runtime (ms per iteration) comparison of ProCL and DEAL  across LLMs}
    \label{fig:hptime}
\end{figure}

\noindent\textbf{On the Forgetting Issues.} We measure the forgetting on BoolQ  as the average accuracy drop after subsequent tasks: 
$\tfrac{1}{2}\big[(\mathrm{Acc}(\text{BoolQ} \mid \text{round 1}) 
- \mathrm{Acc}(\text{BoolQ} \mid \text{round 2})) 
+ (\mathrm{Acc}(\text{BoolQ} \mid \text{round 1}) 
- \mathrm{Acc}(\text{BoolQ} \mid \text{round 3}))\big]$, 
capturing both immediate and cumulative forgetting. As shown in Fig. \ref{fig:forgetting-bool}, existing CL methods exhibit noticeable forgetting, with accuracy drops typically ranging from moderate to severe across models (e.g., DEAL reaches 45.97 on Flan-T5-Large, while Seq-LoRA and O-LoRA often exceed 20 points on Qwen models). While DEAL achieves strong final accuracy on some architectures, its forgetting remains highly variable, reflecting a trade-off between plasticity and stability. In contrast, ProCL consistently reduces forgetting, achieving the lowest or near-lowest drop across all models (e.g., 4.63 on Flan-T5-Base and below 10 on Qwen3-4B/8B). Coupled with its superior final accuracy, this demonstrates that ProCL more effectively balances knowledge retention and adaptation compared to existing approaches.

\noindent\textbf{Efficiency.}
Fig. \ref{fig:hptime} (right) benchmarks ProCL against the second-best baseline DEAL across multiple LLM backbones on a single GPU, averaging over sequence lengths (128, 512) and batch sizes ($4, 16$) using 800 samples per model. 
With routing enabled during training, ProCL still runs $\sim$1.7--2.0$\times$ faster than DEAL and achieves a $\sim$6--8$\times$ speedup at inference by removing routing entirely.

\section{Conclusion}
We proposed ProCL, a memory-structured continual LoRA framework that organizes adaptation through program memory and consolidation. By enabling input-conditioned adapter composition during training and a compact static adapter at inference, ProCL achieves a favorable balance between plasticity and stability without additional deployment cost. Across diverse LLM backbones and tasks, our method consistently reduces catastrophic forgetting and improves retention over existing continual LoRA approaches. These results suggest that structuring the low-rank space with program-level memory is a promising direction for scalable continual adaptation of LLMs.

\section*{Limitations}

ProCL has two key limitations. First, its effectiveness relies on the emergence of routing specialization. When tasks are highly similar or share overlapping features, the routing may collapse to a few dominant programs, reducing the intended separation and limiting gains in mitigating interference. Second, ProCL assumes a fixed set of programs; as the number or diversity of tasks grows, this fixed capacity may become a bottleneck, forcing unrelated tasks to share programs and degrading long-term retention. Future work could improve routing robustness by encouraging program diversity (e.g., via entropy regularization) to prevent collapse under similar tasks. Additionally, extending ProCL with dynamic program expansion or pruning could alleviate fixed-capacity limitations as task diversity grows.

\section*{Broader Impacts}

ProCL improves continual fine-tuning by reducing catastrophic forgetting, which can enhance the efficiency and adaptability of deployed language models in evolving environments. This may benefit applications requiring incremental updates, such as personalized assistants or domain adaptation, by reducing retraining costs and improving knowledge retention. However, improved retention may also preserve outdated or biased information if not carefully managed. Additionally, more efficient continual learning could lower the barrier to deploying large models, potentially amplifying misuse. Future work should consider mechanisms for controlled forgetting, bias mitigation, and responsible deployment.

\bibliographystyle{plainnat}
\bibliography{references}

\newpage

\appendix
\renewcommand{\thesection}{\Alph{section}}  

\section*{Appendix}
\addcontentsline{toc}{section}{Appendix}

\section{Theoretical Results}
For notational simplicity, we describe the theoretical properties of our program memory framework in terms of a general weight matrix $W$. In practice, $W$ corresponds to the adapter weights, which are combined with the pretrained LLM weights to produce the final output. We omit this detail for brevity, as it does not affect the nature of the theoretical analysis. We also assume $\gamma = 1$ for notation simplicity (the result extends trivially by absorbing $\gamma$ into $W_{orig}$), without affecting the validity of the proof.

\subsection{Program Routing Reduces Cross-Task Interaction}\label{app:gradient_interference}

\subsection*{Setup and Assumptions}

Consider a continual learning sequence of tasks $\{1, \dots, T\}$. For each task $t$, let 
$x_t \in \mathbb{R}^{B \times D}$ be the batched input, $y_t^*$ the target, and $\mathcal{L}_t$ the 
task loss. Recall that $\hat{W} = [\hat{W}_1; \dots; \hat{W}_N]$ is formed by concatenating 
$N$ head outputs, where head $h$ produces:
\begin{equation}
    \hat{W}_h = \sum_{n=1}^{N} \bar{\alpha}_{h,n}\,\sigma(s_n)\,W^{(n)}, 
    \quad \hat{W}_h \in \mathbb{R}^{r \times D},\quad r = R/N
    \label{eq:head_output}
\end{equation}
and $\bar{\alpha}_{h,n} = \frac{1}{B}\sum_{b=1}^B \text{Softmax}(Q[h] K_h^\top)_{b,n}$ is 
the batch-averaged attention weight of head $h$ over program $n$. The executing weight is 
$W_{\text{exec}} = W_{\text{orig}} + \hat{W}$, with $W_{\text{orig}}$ frozen throughout training.

For simplicity, the output is  $y = xW_{\text{exec}}^\top$, so the gradient of the loss with respect to 
$W_{\text{exec}}$ is:
\begin{equation}
    \frac{\partial \mathcal{L}_t}{\partial W_{\mathrm{exec}}}
= (x_t W_{\mathrm{exec}}^\top - y_t^*)^\top x_t
\end{equation}

Since $W_{\text{orig}}$ is frozen, we define the residual target:
\begin{equation}
    r_t \triangleq y_t^* - x_t W_{\text{orig}}^\top
    \label{eq:residual}
\end{equation}
which isolates the novel task signal not yet explained by prior knowledge in $W_{\text{orig}}$. 
The gradient then simplifies to:
\begin{equation}
   \frac{\partial \mathcal{L}_t}{\partial W_{\mathrm{exec}}}
= (x_t \hat{W}^\top - r_t)^\top x_t
    \label{eq:exec_grad}
\end{equation}

Since $W^{(n)} \in \mathbb{R}^{r \times D}$ contributes to every head $h$ via 
$\bar{\alpha}_{h,n}$, and $\hat{W}_h$ occupies the $h$-th row block of $\hat{W}$, 
the gradient with respect to program $W^{(n)}$ accumulates across all heads:
\begin{equation}
    \frac{\partial \mathcal{L}_t}{\partial W^{(n)}} 
    = \sum_{h=1}^{N} \frac{\partial \mathcal{L}_t}{\partial \hat{W}_h}
      \cdot \frac{\partial \hat{W}_h}{\partial W^{(n)}}
    = \sigma(s_n) \sum_{h=1}^{N} \bar{\alpha}_{h,n} \cdot
      \big[(x_t \hat{W}^\top - r_t)^\top x_t\big]_h
    \label{eq:prog_grad}
\end{equation}
where $[\cdot]_h$ denotes the $h$-th row block of size $r \times D$. Program $n$ thus 
receives a \emph{head-weighted aggregation} of gradient signals, gated jointly by 
$\sigma(s_n)$ and the per-head routing weights $\{\bar{\alpha}_{h,n}\}_{h=1}^N$.

We analyze \textbf{ProCL} that optimizes programs $\{W^{(n)}\}$ via Eq.~\eqref{eq:prog_grad}, 
          where $W_{\text{orig}}$ anchors gradients to the residual $r_t$, and 
          $\{\bar{\alpha}_{h,n}\}$ selectively gates updates per program per head.
We make the following assumption:
\begin{itemize}
    \item[\textbf{A1.}] \textbf{(Routing specialization)} As training proceeds, 
          attention sharpens so each task $t$ concentrates routing mass on a subset 
          $\mathcal{S}_t \subseteq \{1,\dots,N\}$, with 
          $\mathcal{S}_t \cap \mathcal{S}_{t'} \to \emptyset$ for tasks with 
          distinct input statistics.
\end{itemize}

\begin{proposition}[Program Routing Bounds Cross-Task Gradient Interaction]
\label{prop:interference}
Let $\mathcal{J}(t,t')$ denote the cross-task gradient interaction magnitude between tasks 
$t$ and $t' < t$:
\begin{equation}
    \mathcal{J}(t,t') \triangleq 
    \left|\left\langle 
        \frac{\partial \mathcal{L}_t}{\partial W_{\mathrm{exec}}},\,
        \frac{\partial \mathcal{L}_{t'}}{\partial W_{\mathrm{exec}}}
    \right\rangle_{\!\!F}\right| \geq 0,
\end{equation}
and let $\mathcal{S}(t,t') \triangleq \sum_h |\langle \tilde g_{h,t}, \tilde g_{h,t'} \rangle_F|$ 
aggregate per-head interaction magnitudes. Then \textbf{ProCL} satisfies
\begin{equation}
    \mathcal{J}_{\textup{ProCL}}(t,t') \;\leq\; \big(\max_h \beta_h\big)\, \mathcal{S}(t,t').
\end{equation}
Under Assumption~\textbf{A1}, $\max_h \beta_h \to 0$ as routing specialization 
increases, and consequently $\mathcal{J}_{\textup{ProCL}}(t,t') \to 0$.
\end{proposition}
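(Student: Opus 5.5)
The plan is to make the quantities in Proposition~\ref{prop:interference} precise and then combine three ingredients: the chain rule of Eq.~\eqref{eq:prog_grad}, the triangle inequality, and the simplex structure of the routing weights. The proposition leaves the gradient in $\mathcal{J}$ and the factor $\beta_h$ only loosely defined, so I fix them first. Let $\tilde g_{h,t} := \big[(x_t\hat W^\top - r_t)^\top x_t\big]_h$ be the $h$-th row block of the executing-weight gradient in Eq.~\eqref{eq:exec_grad}. Let $\bar\alpha_{h,n,t}$ and $s_{n,t}$ denote the routing weights and gates in effect while training on task $t$. In ProCL, $W_{\mathrm{exec}}$ moves only through the programs, and a task-$t$ gradient reaches a task-$t'$ execution only through the programs both tasks share. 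So I read $\mathcal{J}_{\mathrm{ProCL}}(t,t')$ as the inner product of the two gradients after pulling them back through the routing map into program space. This is the quantity the proof sketch of Remark~1 bounds.

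\emph{Step 1 (expansion).} Use Eq.~\eqref{eq:prog_grad} for both tasks and sum the program-wise inner products:
\begin{equation*}
\sum_n \Big\langle \tfrac{\partial\mathcal L_t}{\partial W^{(n)}}, \tfrac{\partial\mathcal L_{t'}}{\partial W^{(n)}}\Big\rangle_F
= \sum_{h,h'}\sum_n \sigma(s_{n,t})\sigma(s_{n,t'})\,\bar\alpha_{h,n,t}\,\bar\alpha_{h',n,t'}\,\langle \tilde g_{h,t},\tilde g_{h',t'}\rangle_F .
\end{equation*}
The diagonal terms $h=h'$ give exactly $\sum_h \beta_h \langle\tilde g_{h,t},\tilde g_{h,t'}\rangle_F$, with $\beta_h$ as in the proof sketch of Remark~1.

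\emph{Step 2 (bound).} Apply the triangle inequality and pull out $\max_h\beta_h$. This gives $\mathcal{J}_{\mathrm{ProCL}}\le(\max_h\beta_h)\,\mathcal{S}(t,t')$.

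\emph{Step 3 ($\beta_h\in[0,1]$).} We have $\sigma(\cdot)\in(0,1)$, and each $\bar\alpha_{h,\cdot,t}$ is an average of softmax rows and hence lies on the simplex. Therefore
\begin{equation*}
\beta_h\le\sum_n\bar\alpha_{h,n,t}\bar\alpha_{h,n,t'}\le\max_n\bar\alpha_{h,n,t'}\sum_n\bar\alpha_{h,n,t}\le 1 .
\end{equation*}

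\emph{Step 4 (limit under A1).} Let $\epsilon_t$ be the routing mass task $t$ places outside $\mathcal S_t$. Split the sum over $n$ into three parts: $n\notin\mathcal S_t$, $n\notin\mathcal S_{t'}$, and $n\in\mathcal S_t\cap\mathcal S_{t'}$. This gives $\beta_h\le\epsilon_t+\epsilon_{t'}+\sum_{n\in\mathcal S_t\cap\mathcal S_{t'}}\bar\alpha_{h,n,t}\bar\alpha_{h,n,t'}$. Under A1 all three terms vanish, so $\max_h\beta_h\to0$. Since $\mathcal S(t,t')$ stays bounded (inputs and residuals are bounded), $\mathcal{J}_{\mathrm{ProCL}}\to0$.

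The main obstacle is the cross-head terms $h\neq h'$ in Step 1, which a per-head $\beta_h$ does not cover. My first attempt is to define $\mathcal S$ over all head pairs, set $\beta_{h,h'}=\sum_n\sigma\sigma\,\bar\alpha_{h,n,t}\bar\alpha_{h',n,t'}$, and rerun Steps 2–4 with $\max_{h,h'}\beta_{h,h'}$. Steps 3–4 go through verbatim, because the split in Step 4 uses only the supports $\mathcal S_t,\mathcal S_{t'}$ and not the head index. I then recover the stated form by taking A1 to apply head-wise (the same supports for every head), or by absorbing the cross-head sums into $\tilde g_{h,t}$ so that $\mathcal S(t,t')$ already counts them. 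Either way the monotone dependence on routing overlap, and hence the limit, is unaffected.
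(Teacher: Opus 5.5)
Your Steps 2--4 are sound and track the paper's. You bound $\beta_h\le 1$ by an $\ell_1$--$\ell_\infty$ estimate where the paper uses Cauchy--Schwarz, and your three-way split in the A1 limit is a slightly cleaner version of the paper's two-way split. The gap is the passage from Step 1 to Step 2.

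Once you read $\mathcal{J}_{\mathrm{ProCL}}$ as the program-space inner product $\sum_n\langle\partial\mathcal{L}_t/\partial W^{(n)},\partial\mathcal{L}_{t'}/\partial W^{(n)}\rangle_F$, the $r\times D$ blocks $\tilde g_{h,t}$ and $\tilde g_{h',t'}$ all land in the same slot space. So the cross-head terms are really there. None of your remedies then yields the stated inequality with the diagonal $\mathcal{S}(t,t')$ and per-head $\beta_h$:
\begin{itemize}
\item The pairwise $\beta_{h,h'}$ version proves a different bound.
\item Head-independent supports, which is what A1 already says, only make $\max_{h,h'}\beta_{h,h'}\to 0$; they do not remove the cross terms from the inequality itself.
\item ``Absorbing'' the cross terms into $\tilde g_{h,t}$ cannot keep $\beta_h$ as a separate factor.
\end{itemize}

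In fact, under your reading the stated inequality is false. Take $N=2$, let task $t$ route head~1 to program~1 and head~2 to program~2, and let task $t'$ do the reverse (in the near-one-hot softmax limit). Then $\beta_1,\beta_2\to 0$, so the right-hand side tends to zero. Your program-space inner product, however, tends to $\sigma(s_{1,t})\sigma(s_{1,t'})\langle\tilde g_{1,t},\tilde g_{2,t'}\rangle_F+\sigma(s_{2,t})\sigma(s_{2,t'})\langle\tilde g_{2,t},\tilde g_{1,t'}\rangle_F$, which is generally nonzero.

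The paper gets the diagonal form by staying in $W_{\mathrm{exec}}$-space. It embeds each $\tilde g_{h,t}$ in $\mathbb{R}^{R\times D}$ on its own row block. Then $\langle\tilde g_{h,t},\tilde g_{h',t'}\rangle_F=0$ for $h\neq h'$ by disjoint support, and $\langle G^t,G^{t'}\rangle_F=\sum_h\langle\tilde g_{h,t},\tilde g_{h,t'}\rangle_F$. It then \emph{takes} the routed interaction to be $\sum_h\beta_h\langle\tilde g_{h,t},\tilde g_{h,t'}\rangle_F$, citing Eq.~\eqref{eq:prog_grad}. That identity is posited, not derived; your Step~1 shows that an honest pull-back through the programs does not produce it. So the proposition is really a statement about this head-wise $\beta_h$-weighted quantity.

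To prove the statement as written, adopt that definition of $\mathcal{J}_{\mathrm{ProCL}}$; your Steps 2--4 then finish the argument verbatim, using $\beta_h\ge 0$ to pull out the max. If you keep your pull-back definition, what you can actually prove is $\mathcal{J}_{\mathrm{ProCL}}\le(\max_{h,h'}\beta_{h,h'})\sum_{h,h'}|\langle\tilde g_{h,t},\tilde g_{h',t'}\rangle_F|$. Under A1 and bounded gradients, this still gives $\mathcal{J}_{\mathrm{ProCL}}\to 0$.
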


\begin{proof}
Let $G^t = \frac{\partial \mathcal L_t}{\partial W_{\mathrm{exec}}}$ denote the effective adapter gradient of task $t$.
Since $\hat{W} = [\hat W_1, \dots, \hat W_N]$ is formed by concatenating
the outputs of $N$ heads along the row dimension, the gradient naturally
decomposes into disjoint row blocks. Let $\tilde g_{h,t}$ denote the gradient
block corresponding to head $h$, embedded in the full $R \times D$ space. 
Because different heads occupy non-overlapping row blocks, their supports are
disjoint, and thus for $h \neq h'$,
\[
\langle \tilde g_{h,t}, \tilde g_{h',t'} \rangle_F = 0.
\]
Therefore, the full inner product decomposes as
\[
\langle G^t , G^{t'} \rangle_F
=
\sum_h
\langle \tilde g_{h,t}, \tilde g_{h,t'} \rangle_F .
\]

Under ProCL, gradients are routed through programs as in Eq.~\ref{eq:prog_grad}, yielding
\[
\langle G^t , G^{t'} \rangle_F^{\text{ProCL}}
=
\sum_h
\beta_h
\langle \tilde g_{h,t}, \tilde g_{h,t'} \rangle_F ,
\]
where
\[
\beta_h =
\sum_n
\sigma(s_{n,t})\sigma(s_{n,t'})
\bar{\alpha}_{h,n,t}\bar{\alpha}_{h,n,t'} .
\]

Here, because $s_n$ is a task-specific parameter (Sec. 4.3), we write $s_{n,t}$ to denote the value learned during task $t$. We first bound $\beta_h$.
Define
\[
u_n = \sigma(s_{n,t})\bar{\alpha}_{h,n,t},
\qquad
v_n = \sigma(s_{n,t'})\bar{\alpha}_{h,n,t'} .
\]
Then $\beta_h = \sum_n u_n v_n$.
By the Cauchy--Schwarz inequality,
\[
\beta_h \le \|u\|_2 \|v\|_2 .
\]

Since $0<\sigma(\cdot)<1$ and routing weights form a probability distribution,
\[
\sum_n \bar{\alpha}_{h,n,t} = 1, \qquad 0 \le \bar{\alpha}_{h,n,t} \le 1 ,
\]
we obtain
\[
\|u\|_2^2
=
\sum_n \sigma(s_{n,t})^2 \bar{\alpha}_{h,n,t}^2
\le
\sum_n \bar{\alpha}_{h,n,t}^2
\le
\sum_n \bar{\alpha}_{h,n,t}
=
1 .
\]
Thus $\|u\|_2 \le 1$ and similarly $\|v\|_2 \le 1$, implying
\[
0 \le \beta_h \le 1 .
\]

Applying the triangle inequality to the routed inner-product decomposition,
\[
\mathcal{J}_{\text{ProCL}}(t,t')
=
\Big|\sum_h \beta_h \langle \tilde g_{h,t}, \tilde g_{h,t'} \rangle_F\Big|
\le
\sum_h \beta_h \big|\langle \tilde g_{h,t}, \tilde g_{h,t'} \rangle_F\big|.
\]
Bounding each $\beta_h$ by $\max_{h'} \beta_{h'}$,
\[
\mathcal{J}_{\text{ProCL}}(t,t')
\le
\big(\max_h \beta_h\big) \sum_h \big|\langle \tilde g_{h,t}, \tilde g_{h,t'} \rangle_F\big|
=
\big(\max_h \beta_h\big)\, \mathcal{S}(t,t').
\]

Finally, under Assumption~A1 (routing specialization), for each task $t$ and head $h$, the routing distribution $\bar{\alpha}_{h,\cdot,t}$ concentrates on a subset $\mathcal S_t \subseteq \{1,\dots,N\}$ such that
\[
\sum_{n \in \mathcal S_t} \bar{\alpha}_{h,n,t} \to 1,
\qquad
\sum_{n \notin \mathcal S_t} \bar{\alpha}_{h,n,t} \to 0,
\]
and for tasks with distinct input statistics, these subsets become asymptotically disjoint, i.e.,
\[
\mathcal S_t \cap \mathcal S_{t'} = \emptyset.
\]

Since $0 < \sigma(\cdot) \le 1$, we upper bound
\[
\beta_h
\le
\sum_{n}
\bar{\alpha}_{h,n,t}\bar{\alpha}_{h,n,t'}
=
\langle \bar{\alpha}_{h,\cdot,t}, \bar{\alpha}_{h,\cdot,t'} \rangle.
\]

We now show that this inner product vanishes. Decompose over $\mathcal S_t$:
\[
\sum_n \bar{\alpha}_{h,n,t}\bar{\alpha}_{h,n,t'}
=
\sum_{n \in \mathcal S_t}
\bar{\alpha}_{h,n,t}\bar{\alpha}_{h,n,t'}
+
\sum_{n \notin \mathcal S_t}
\bar{\alpha}_{h,n,t}\bar{\alpha}_{h,n,t'}.
\]

For the first term, since $\mathcal S_t \cap \mathcal S_{t'} = \emptyset$, all indices in $\mathcal S_t$ lie outside $\mathcal S_{t'}$, and thus
\[
\sum_{n \in \mathcal S_t}
\bar{\alpha}_{h,n,t}\bar{\alpha}_{h,n,t'}
\le
\Big(\sum_{n \in \mathcal S_t}
\bar{\alpha}_{h,n,t}\Big)
\cdot
\max_{n \in \mathcal S_t} \bar{\alpha}_{h,n,t'}
\;\to\; 0,
\]
since $\bar{\alpha}_{h,n,t'} \to 0$ for $n \notin \mathcal S_{t'}$.

For the second term,
\[
\sum_{n \notin \mathcal S_t}
\bar{\alpha}_{h,n,t}\bar{\alpha}_{h,n,t'}
\le
\sum_{n \notin \mathcal S_t}
\bar{\alpha}_{h,n,t}
\;\to\; 0.
\]

Combining both bounds yields
\[
\sum_n \bar{\alpha}_{h,n,t}\bar{\alpha}_{h,n,t'} \to 0,
\quad\text{and hence}\quad
\beta_h \to 0
\quad\text{for every } h.
\]

Therefore $\max_h \beta_h \to 0$. Since $\mathcal{S}(t,t')$ is a finite sum of finite-dimensional gradient inner products, the bound 
$\mathcal{J}_{\text{ProCL}}(t,t') \le (\max_h \beta_h)\, \mathcal{S}(t,t')$
gives
\[
\mathcal{J}_{\text{ProCL}}(t,t') \to 0.
\]
\end{proof}

\subsection{Theoretical Justification of Consolidation and Efficient Inference}
\label{app:consolidation}

We formalize the consolidation mechanism within a single task, under a
late-training regime where the program memory and routing behavior are
approximately stable. The adapter is structured as
\[
W = [W^{(1)}; \ldots; W^{(N)}],
\]
and the routed program composition is
\begin{equation}
\Delta(x) := \hat{W}(x) = \sum_{n=1}^{N} \alpha_n(x)\, \sigma(s_n)\, W^{(n)},
\end{equation}
where $\alpha_n(x)$ are routing weights with $\sum_n \alpha_n(x) = 1$. The
executing adapter is
\[
W_{\mathrm{exec}}(x) = \gamma W_{\mathrm{orig}} + \Delta(x),
\]
and consolidation updates
\[
W^{(t+1)} = (1-\lambda) W^{(t)} + \lambda W_{\mathrm{exec}}(x_t),
\qquad x_t \sim \mathcal{D},
\]
where $\lambda \in (0,1)$ is the consolidation rate.

\paragraph{Setup and Assumptions.}

Within the analysis window, we make the following assumptions:

\begin{itemize}
    \item[\textbf{A2.}] \textbf{(Late-training stationarity)} The map
    $x \mapsto \Delta(x)$ is approximately stationary: the slot drift
    induced by consolidation is small relative to the routing-induced
    averaging over the analysis window. We treat $\Delta(x)$ as a fixed
    function of $x$ for the purposes of the recursion analysis.

    \item[\textbf{A3.}] \textbf{(Sampling)} The samples $x_t \sim \mathcal{D}$
    are drawn i.i.d., so that $\mathbb{E}[\Delta(x_t) \mid W^{(t)}] =
    \mathbb{E}_{x \sim \mathcal{D}}[\Delta(x)]$.

    \item[\textbf{A4.}] \textbf{(Boundedness)} $\mathbb{E} \|\Delta(x)\|_F
    < \infty$, ensuring $W^{*}$ is well-defined and
    $\mathbb{E}[\Delta(x)]$ exists as a finite matrix.
\end{itemize}

\textit{Scope.} Algorithm~1 resets $W_{\mathrm{orig}} \leftarrow W$ at
task boundaries. Proposition~\ref{prop:consolidation} therefore applies
within a single task, where $W_{\mathrm{orig}}$ and $\gamma$ are held fixed.
Across tasks, the recursion restarts from a new anchor.

\begin{proposition}[Consolidation Approximates Expected Routed Adapter]
\label{prop:consolidation}
Define
\[
W^{*} := \gamma W_{\mathrm{orig}} + \mathbb{E}_{x \sim \mathcal{D}}[\Delta(x)].
\]
Under Assumptions~A2--A4, $\mathbb{E}[W^{(t)}] \to W^{*}$ as
$t \to \infty$.
\end{proposition}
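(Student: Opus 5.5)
The plan is to turn the consolidation rule into a linear recursion for the error $E^{(t)} := W^{(t)} - W^{*}$ and show that its mean contracts geometrically. Using $W_{\mathrm{exec}}(x_t) = \gamma W_{\mathrm{orig}} + \Delta(x_t)$, and noting that $\gamma W_{\mathrm{orig}}$ appears both in the update and in $W^{*}$, a direct substitution gives
\[
E^{(t+1)} = (1-\lambda)E^{(t)} + \lambda\bigl(\Delta(x_t) - \mathbb{E}_{x\sim\mathcal{D}}[\Delta(x)]\bigr).
\]
Here A2 is essential. It lets us treat $\Delta(\cdot)$ as a fixed function, independent of the index $t$ and of $W^{(t)}$. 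The recursion is therefore affine with a constant target, rather than one in which the routed composition drifts along with the consolidated weight. The Scope remark after A4 also lets us hold $W_{\mathrm{orig}}$ and $\gamma$ fixed within the task.

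Next, take expectations. A4 guarantees that $\mathbb{E}[\Delta(x)]$ is a finite matrix, so $W^{*}$ is well defined and every expectation below is finite. For the noise term, A3 gives $\mathbb{E}[\Delta(x_t)\mid W^{(t)}] = \mathbb{E}_{x\sim\mathcal{D}}[\Delta(x)]$. Applying the tower property, the term $\lambda(\Delta(x_t)-\mathbb{E}[\Delta])$ has zero mean. This yields
\[
\mathbb{E}[E^{(t+1)}] = (1-\lambda)\,\mathbb{E}[E^{(t)}],
\qquad
\mathbb{E}[E^{(t)}] = (1-\lambda)^{t}\,\mathbb{E}[E^{(0)}].
\]
The initial error $E^{(0)} = W^{(0)} - W^{*}$ is finite, since $W^{(0)}$ is the adapter at the start of the analysis window and $W^{*}$ is finite by A4. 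Because $\lambda\in(0,1)$, we have $|1-\lambda|<1$, so $\|\mathbb{E}[E^{(t)}]\|_F \le (1-\lambda)^t\|E^{(0)}\|_F \to 0$. This is exactly $\mathbb{E}[W^{(t)}]\to W^{*}$, with an explicit geometric rate.

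The main obstacle is justifying the recursion itself rather than the contraction step, which is elementary. In the actual algorithm, $\Delta(x) = \sum_n \alpha_n(x)\sigma(s_n)W^{(n)}$ is built from the slots of the very $W$ being consolidated. Taken literally, the update is therefore $W^{(t+1)} = (1-\lambda)W^{(t)} + \lambda(\gamma W_{\mathrm{orig}} + \Delta_{W^{(t)}}(x_t))$, which is self-referential. A2 is what removes this dependence, and the proof should say explicitly that it is invoked at this step. I would first handle it by freezing $\Delta$ on the analysis window as A2 permits, and then add a remark. If one instead keeps the dependence, $\Delta_W$ is linear in $W$ with coefficients $\alpha_n\sigma(s_n)\in[0,1)$ that sum to less than $1$. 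The averaged map $W\mapsto(1-\lambda)W+\lambda(\gamma W_{\mathrm{orig}}+\mathbb{E}[\Delta_W])$ should then still be a contraction in a suitable norm, and one would obtain convergence to its fixed point instead of to $W^{*}$. This extension would not be needed for the stated claim.

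Finally, I would note that only the mean converges. The fluctuation term contributes a stationary variance of order $\lambda/(2-\lambda)\cdot\mathrm{Var}[\Delta]$, so $W^{(t)}$ concentrates near $W^{*}$ only to that extent.
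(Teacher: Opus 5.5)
Your proposal is correct and follows the paper's proof essentially step for step. It uses the same error recursion $E^{(t+1)} = (1-\lambda)E^{(t)} + \lambda\xi_t$, uses A3 in the same way to make the noise term vanish in expectation, and unrolls to the same geometric bound $\mathbb{E}[E^{(t)}] = (1-\lambda)^t E^{(0)}$, with A2 invoked at the same point to treat $\Delta$ as fixed. Your closing remarks go beyond the paper's proof and are not needed, though the stationary-variance claim requires a finite second moment of $\Delta(x)$, which A4 (a first-moment bound) does not supply.
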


\begin{proof}
Under A2, $\Delta(x)$ is a linear combination of fixed program slots:
\[
\Delta(x) = \sum_{n=1}^{N} \alpha_n(x)\, \sigma(s_n)\, W^{(n)}.
\]
By linearity of expectation, justified by A4,
\[
\mathbb{E}[\Delta(x)] = \sum_{n=1}^{N} \mathbb{E}[\alpha_n(x)]\,
\sigma(s_n)\, W^{(n)}.
\]

Define $E^{(t)} := W^{(t)} - W^{*}$. From the consolidation update,
\begin{align*}
E^{(t+1)}
&= (1-\lambda) W^{(t)} + \lambda\big(\gamma W_{\mathrm{orig}} + \Delta(x_t)\big)
   - W^{*} \\
&= (1-\lambda) W^{(t)} - (1-\lambda)\gamma W_{\mathrm{orig}}
   + \lambda \Delta(x_t) - \mathbb{E}[\Delta(x)] \\
&= (1-\lambda)\big(W^{(t)} - \gamma W_{\mathrm{orig}} - \mathbb{E}[\Delta(x)]\big)
   + \lambda\big(\Delta(x_t) - \mathbb{E}[\Delta(x)]\big) \\
&= (1-\lambda) E^{(t)} + \lambda \xi_t,
\end{align*}
where $\xi_t := \Delta(x_t) - \mathbb{E}[\Delta(x)]$.

Under A3, $\mathbb{E}[\xi_t \mid W^{(t)}] = 0$. Taking expectation,
\[
\mathbb{E}[E^{(t+1)}] = (1-\lambda)\, \mathbb{E}[E^{(t)}].
\]
Unrolling the recursion gives
\[
\mathbb{E}[E^{(t)}] = (1-\lambda)^t\, E^{(0)} \to 0
\quad \text{as } t \to \infty,
\]
since $\lambda \in (0, 1)$ implies $|1-\lambda| < 1$. Therefore
$\mathbb{E}[W^{(t)}] \to W^{*}$.
\end{proof}

\paragraph{Remark on the late-training regime.}
Proposition~\ref{prop:consolidation} is a within-task, late-training
result, applying once the program memory and routing are approximately
stable --- a condition that is natural as training converges. Under this
regime, consolidation performs online stochastic approximation of
$W^{*}$, with each program slot $W^{(n)}$ contributing proportionally
to its expected routing frequency $\mathbb{E}[\alpha_n(x)]$ and scaling
factor $\sigma(s_n)$: frequently activated and highly scaled programs
dominate the consolidated adapter. Outside this regime, the target
$W^{*}_t$ becomes time-varying; consolidation then tracks a slowly
drifting mean, with $\lambda$ governing the trade-off between tracking
speed and variance.

\section{Method Details}

\begin{algorithm}[t]
\caption{ProCL Training}
\label{alg:procl-train}
\begin{algorithmic}[1]
\Require Continual datasets $\{\mathcal{D}_1,\ldots,\mathcal{D}_T\}$, initial LoRA adapter $W\in\mathbb{R}^{R\times D}$, number of programs $N$, consolidation rate $\lambda$
\Ensure Final consolidated adapter $W$

\State Partition $W$ into programs $\{W^{(1)},\ldots,W^{(N)}\}$, where $W^{(n)}\in\mathbb{R}^{r\times D}$ and $r=R/N$
\State $W_{\mathrm{orig}} \leftarrow W$

\For{task $t=1$ \textbf{to} $T$}

    \For{mini-batch $(\mathbf{x},\mathbf{y}^{*})\sim \mathcal{D}_t$}

        \For{sample $b=1$ \textbf{to} $B$}
            \State $\mathbf{z}^{(b)} \leftarrow \frac{1}{T}\sum_{\ell=1}^{T}\mathbf{x}^{(b)}_{\ell}$
            \State $\mathbf{Q}^{(b)} \leftarrow f_{\mathrm{enc}}(\mathbf{z}^{(b)}) \in \mathbb{R}^{N\times d_k}$
        \EndFor

        \For{head $h=1$ \textbf{to} $N$}
            \For{sample $b=1$ \textbf{to} $B$}
                \State $\boldsymbol{\alpha}^{(b)}_h \leftarrow
                \mathrm{Softmax}\!\left(\mathbf{Q}^{(b)}[h]K_h^\top\right)$
            \EndFor

            \State $\bar{\boldsymbol{\alpha}}_h \leftarrow
            \frac{1}{B}\sum_{b=1}^{B}\boldsymbol{\alpha}^{(b)}_h$

            \State $\widehat{W}_h \leftarrow
            \sum_{n=1}^{N}\bar{\alpha}_{h,n}\sigma(s_n)W^{(n)}$
        \EndFor

        \State $\widehat{W} \leftarrow [\widehat{W}_1 \,\|\, \cdots \,\|\, \widehat{W}_N]$

        \State $W_{\mathrm{exec}} \leftarrow \gamma W_{\mathrm{orig}} + \widehat{W}$

        \State Compute predictions $y$ using $W_{\mathrm{exec}}$
        \State Compute loss $\mathcal{L}(\mathbf{y},\mathbf{y}^{*})$

        \State Update trainable parameters via backpropagation
        \Comment{$W_{\mathrm{orig}}$ is frozen}

        \State $W \leftarrow (1-\lambda)W + \lambda W_{\mathrm{exec}}$
        \Comment{Consolidation}

    \EndFor

    \State $W_{\mathrm{orig}} \leftarrow W$
    \Comment{Freeze consolidated adapter for next task}

\EndFor

\State \Return $W$
\end{algorithmic}
\end{algorithm}

\subsection{Training Configurations and Hyperparameters}\label{app:core-hparams}
Unless otherwise stated, we use LoRA-based continual fine-tuning, where the adapter from round $t\!-\!1$ initializes the adapter in round $t$.
For text classification (TC) continual experiments, we follow prior work exactly\footnote{\url{https://github.com/zzm-black/DEAL-Continuous-Low-Rank-Fine-Tuning}} (Creative Commons Attribution-NonCommercial 4.0 International license), using a constant learning rate with no warmup and sequence-length limits of 512/50 (source/target). For QA tasks, we use the same base learning rate and epoch count with task-specific generation lengths. The default hyperparameters for all baselines are reported in Table \ref{tab:corehp}. A single continual learning run typically takes 2--24 hours on a V100 and 1--10 hours on an H100, depending on model size and task sequence.

\textbf{Datasets.}
For TC continual learning, we use up to 15 datasets:
\texttt{yelp}, \texttt{amazon}, \texttt{MNLI}, \texttt{CB}, \texttt{COPA}, \texttt{QQP}, \texttt{RTE}, \texttt{IMDB}, \texttt{SST-2}, \texttt{dbpedia}, \texttt{agnews}, \texttt{yahoo}, \texttt{MultiRC}, \texttt{BoolQA}, and \texttt{WiC}. 
For QA continual learning, we use up to 3 datasets: \texttt{google/boolq}, \texttt{squad}, and \texttt{adversarial\_qa}.

\begin{table}[t]
\centering

\begin{tabular}{lcc}
\toprule
\textbf{Model} & \textbf{Size} & \textbf{Hugging Face Link} \\
\midrule
LLaMA-3.2 & 3B & \url{https://huggingface.co/meta-llama/Llama-3.2-3B-Instruct} \\
LLaMA-3.1 & 8B & \url{https://huggingface.co/meta-llama/Llama-3.1-8B-Instruct} \\
Qwen3 & 4B & \url{https://huggingface.co/Qwen/Qwen3-4B} \\
Qwen3 & 8B & \url{https://huggingface.co/Qwen/Qwen3-8B} \\
Flan-T5 & Base & \url{https://huggingface.co/google/flan-t5-base} \\
Flan-T5 & Large & \url{https://huggingface.co/google/flan-t5-large} \\
\bottomrule
\end{tabular}
\caption{LLM used in our experiments. All models are publicly available on Hugging Face.}\label{tab:llm}
\end{table}

\begin{table}[t]
\centering
\small
\begin{tabular}{ll}
\toprule
\textbf{Hyperparameter} & \textbf{Value} \\
\midrule
Backbone models & Llama-3 / Flan-T5 / Qwen3 \\
PEFT method & LoRA (continual adapter initialization) \\
LoRA rank $r$ & 32 (UIE), 16 (QA) \\
LoRA scaling $\alpha$ & 32 \\
LoRA dropout & 0.1 \\
Learning rate & $1\times10^{-5}$ \\
Train batch size (per device) & 4 (TC), 16 (QA) \\
Gradient accumulation steps & 2 (TC), 1 (QA) \\
Number of epochs & 1 per task (except for TC's MNLI: 2) \\
Input / output length & 512 / 50 (TC), QA decoding up to 32 new tokens \\
\bottomrule
\end{tabular}
\vspace{6pt}
\caption{Core training hyperparameters used across experiments.}\label{tab:corehp}
\end{table}


\end{document}